\def\BibTeX{{\rm B\kern-.05em{\sc i\kern-.025em b}\kern-.08em
    T\kern-.1667em\lower.7ex\hbox{E}\kern-.125emX}}
\begin{document}

\title{MOB-FL: Mobility-Aware Federated Learning for Intelligent Connected Vehicles}

\author{\IEEEauthorblockN{
Bowen Xie\IEEEauthorrefmark{1}, 
Yuxuan Sun\IEEEauthorrefmark{2}, 
Sheng Zhou\IEEEauthorrefmark{1}, 
Zhisheng Niu\IEEEauthorrefmark{1}, 
Yang Xu\IEEEauthorrefmark{3}, 
Jingran Chen\IEEEauthorrefmark{3},
Deniz G\"und\"uz\IEEEauthorrefmark{4}}\\
 \IEEEauthorblockA{\IEEEauthorrefmark{1}Beijing National Research Center for Information Science and Technology\\
  Department of Electronic Engineering, Tsinghua University, Beijing 100084, China\\
  \IEEEauthorrefmark{2}School of Electronic and Information Engineering, Beijing Jiaotong University, Beijing 100044, China\\
  \IEEEauthorrefmark{3}Department of Standard and Research, OPPO, Beijing, China\\
  \IEEEauthorrefmark{4}Department of Electrical and Electronic Engineering, Imperial College London, London SW7 2BT, UK\\
  Email: xbw22@mails.tsinghua.edu.cn, yxsun@bjtu.edu.cn,\\
   \{sheng.zhou, niuzhs\}@tsinghua.edu.cn,
   \{xuyang, chenjingran\}@oppo.com,
   d.gunduz@imperial.ac.uk }

}

\maketitle

\begin{abstract}
Federated learning (FL) is a promising approach to enable the future Internet of vehicles consisting of intelligent connected vehicles (ICVs) with powerful sensing, computing and communication capabilities. We consider a base station (BS) coordinating nearby ICVs to train a neural network in a collaborative yet distributed manner, in order to limit data traffic and privacy leakage.
However, due to the mobility of vehicles, the connections between the BS and ICVs are short-lived, which affects the resource utilization of ICVs, and thus, the convergence speed of the training process. 
In this paper, we propose an accelerated FL-ICV framework, by optimizing the duration of each training round and the number of local iterations, for better convergence performance of FL. 
We propose a mobility-aware optimization algorithm called MOB-FL, which aims at maximizing the resource utilization of ICVs under short-lived wireless connections, so as to increase the convergence speed. Simulation results based on the beam selection and the trajectory prediction tasks verify the effectiveness of the proposed solution.

\end{abstract}

\begin{IEEEkeywords}
Intelligent connected vehicles, federated learning, mobility
\end{IEEEkeywords}

\section{Introduction}

Intelligent connected vehicles (ICVs) play an important role in the future Internet of vehicles (IoV), where vehicles can communicate via the vehicle-to-everything (V2X) technologies, such as dedicated short range communication (DSRC) and cellular V2X (C-V2X) \cite{v2x2}.
However, due to the mobility of vehicles and the complex traffic flows, many typical tasks for ICVs, such as driving trajectory prediction, traffic flow prediction, and smart V2X communication, are facing highly dynamic environments. Data-driven machine learning (ML) solutions are emerging as a promising approach to tackle these complex tasks.

ICVs are equipped with multiple sensors like cameras, GPS, and LiDAR, and generate abundant real-time data to be used by ML algorithms. However, centralized training is not an option in practice, both due to the high communication cost of offloading such amounts of distributed data to a cloud server, and the associated privacy concerns. Federated learning (FL) is a promising solution, where a central server coordinates end devices to collaboratively train a neural network (NN) model in a distributed fashion \cite{mcmahan2017}. 
Recent studies have shown the promise of FL for vehicular networks \cite{meet, du2020}.
However, there are still many challenges for the FL with ICVs, such as the mobility of vehicles, limited computing and communication resources, and dynamic IoV environments \cite{du2020,chen2021distributed}. 

Existing studies about FL over wireless networks or vehicular networks have been focusing on the optimization strategies of resource allocation \cite{yang2020energy}, device scheduling \cite{yang2019scheduling, amiri2021convergence, sun2021dynamic}, model aggregation \cite{ye2020}, and their joint optimization \cite{shi2021, wang2021}. Their main objective is to improve the convergence rate of FL while saving energy and satisfying latency constraints. 
Specifically, by jointly optimizing the device scheduling and spectrum resource allocation, the work \cite{shi2021} maximizes the model training accuracy given the total training time.
Since the computing capability and data quality of ICVs can affect the training efficiency, they are taken into account for the design of model aggregation approaches in \cite{ye2020} and \cite{wang2021}. 
References \cite{nori2021} and \cite{prakash2021} balance the trade-off between the computing and communication latency to reduce the convergence time. 
A multi-layer FL framework and the related heterogeneous model aggregation strategies are also proposed in the IoV scenario in \cite{zhou2021}.

\begin{figure*}[htbp]
\centerline{\includegraphics[width=0.95\textwidth, trim=0 310 0 0,clip]{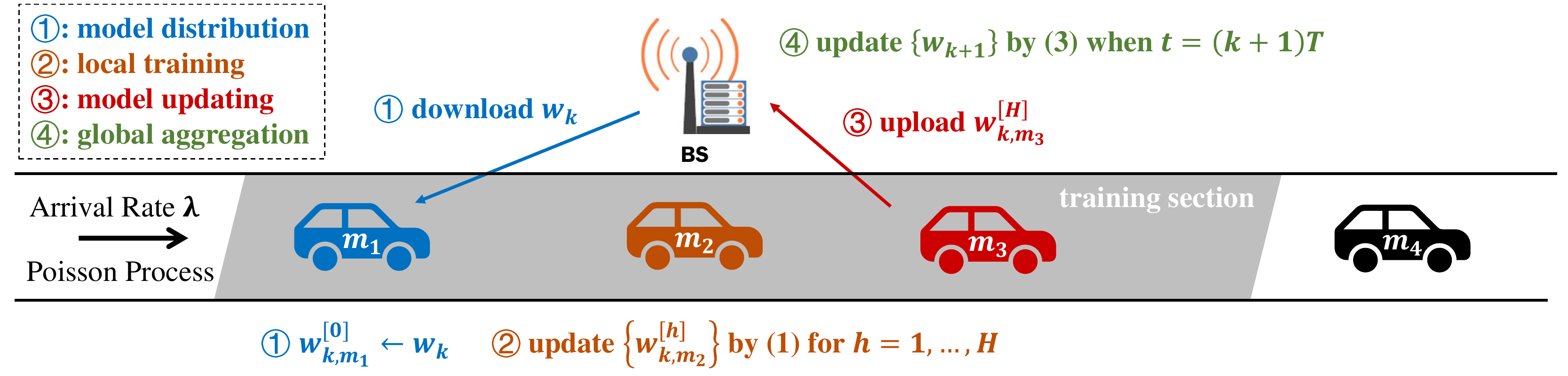}}
\caption{The FL-ICV framework.}
\label{fig:system model}
\end{figure*}

Nevertheless, these papers do not consider the \emph{dynamic scenario}, where the set of schedulable devices is time-varying during the FL process due to the mobility of vehicles. Moreover, ICVs may fail to upload their local models to the BS before they move away and lose connection. We notice that some papers like \cite{yu2020mobility, xiao2021vehicle} also consider the dynamic case. However, different from those papers, we focus on the optimization of \emph{two major hyper-parameters}, the \emph{round duration} and the \emph{local iteration number}, which affect the convergence speed of FL.

In this paper, we consider an FL-ICV framework, where a BS coordinates the FL process as the parameter server, and the passing ICVs participate in FL as end devices. Assuming the Poisson arrival of ICVs at the coverage area of the BS, we analyze the probability distribution of the number of ICVs that successfully upload their local models in each round, and prove that it also follows a Poisson distribution. Based on the analytical results, we formulate an optimization problem to maximize the convergence speed of FL, by determining the optimal round duration and the local iteration number. A mobility-aware optimization algorithm called MOB-FL is proposed to solve this problem. 
Two practical scenarios, namely \emph{millimeter-wave beam selection} and \emph{driving trajectory prediction}, are considered in the experiments, using Raymobtime \cite{raymobtime} and Argoverse \cite{argoverse} datasets, respectively.
Results of experiments show the effectiveness of the proposed optimization algorithm.

\section{System Model and Problem Formulation}{\label{sec:System Model and Problem Formulation}}

\subsection{The FL-ICV Framework}
As shown in Fig. \ref{fig:system model}, we consider an FL-ICV framework, where a BS orchestrates the training of a NN model $\boldsymbol{w}$, with the help of ICVs passing by. The BS acts as the parameter server, covering a road section of length $L$, called the \emph{training section}. The ICVs serve as the devices participating in the FL process when and only when driving through the training section. The speed of ICVs is denoted by $v$ and the arrival of ICVs at the training section follows a Poisson process with rate $\lambda$.

Similar to the traditional federated learning (TFL) framework \cite{mcmahan2017}, the BS in the FL-ICV framework is responsible for model dissemination and aggregation, while ICVs train the model with their local datasets using their on-board computing resources. We assume that local data samples are generated before the arrival of ICVs (e.g., past driving trajectories for motion forecasting). The goal of training is to minimize the global loss function $\mathcal{F}\left( \boldsymbol{w}, \mathcal{B}_{\rm{glob}} \right)$, i.e., the average loss of model $\boldsymbol{w}$ on the global dataset $\mathcal{B}_{\rm{glob}}$.

In the FL-ICV framework, the duration of each training round is denoted by $T$, and the number of local iterations in each round is denoted by $H$. We denote the set of ICVs which appear during the $k$-th round by $\mathcal{M}_k$, with cardinality $M_k$.
Different from TFL, where the set of available devices is fixed, the set of schedulable ICVs is time-varying, which means $\mathcal{M}_k$ and $M_k$ change over time. More precisely, each ICV can contribute to training only over a limited time duration, while passing by the BS.

There are four stages in each round of FL-ICV:
\subsubsection{Model distribution}
At the beginning of the $k$-th round, the BS distributes the current global model $\boldsymbol{w}_k$ to the ICVs within the training section. New ICVs may arrive at the training section during the rest of this round, and $\boldsymbol{w}_k$ is transmitted to all the ICVs upon their arrivals. We denote the communication delay caused by the model distribution from the BS to the ICV-$m$ in the $k$-th round by $\tau^{\rm{down}}_{k,m}$.

\subsubsection{Local training}
After receiving the global model $\boldsymbol{w}_k$, the ICV-$m$ performs local training for $H$ iterations using stochastic gradient descent (SGD):
\begin{equation}
    \label{eq:FL local update}
    \boldsymbol{w}_{k,m}^{[h]} = \boldsymbol{w}_{k,m}^{[h-1]} - \eta \nabla \mathcal{F}\left(\boldsymbol{w}_{k,m}^{[h-1]},\mathcal{B}_{k,m}^{[h]}\right), h = 1,\dots,H,
\end{equation}
where $\boldsymbol{w}_{k,m}^{[h]}$ is the local model of ICV-$m$ after the $h$-th local iteration in the $k$-th round, and $\boldsymbol{w}_{k,m}^{[0]} = \boldsymbol{w}_k$. Parameter $\eta$ is the learning rate, $\mathcal{B}_{k,m}^{[h]}$ is a batch of data sampled randomly from the local dataset $\mathcal{D}_m$ of ICV-$m$ for its $h$-th local iteration.
We denote the local training delay of ICV-$m$, from receiving $\boldsymbol{w}_k$ until obtaining $\boldsymbol{w}_{k,m}^{[H]}$, by $\tau^{\rm{cp}}_{k,m}$, which is influenced by the computing power and workload of ICV-$m$, the computational complexity of the training task, and the local iteration number $H$. The computing delay $\tau^{\rm{cp}}_{k,m}$ is modeled as a random variable following a shifted exponential distribution \cite{lee2018}:
\begin{equation}
    \mathbb{P}\left\{\tau_{k,m}^{\rm{cp}} \leq t \right\} = \left \{
    \begin{array}{ll}
    1-e^{-\frac{1}{\beta H}\left(t-\alpha H\right)}, & t \geq \alpha H, \\
    0, & \rm{otherwise}, \\
    \end{array}
    \right.
\label{eq:local train delay}
\end{equation}
where $\alpha$ is the minimum computing delay for one local iteration and $\beta$ is a parameter characterizing randomness. 

\subsubsection{Model uploading}
On completing local training, ICVs upload their updated local models to the BS, as long as they are still within the training section. Note that an ICV may fail to upload its local model if the ICV leaves the training section before the completion of its model uploading. The set of ICVs that successfully upload their models within the $k$-th round is denoted by $\mathcal{M}^{\rm{suc}}_k$ with cardinality $M^{\rm{suc}}_k$, where $\mathcal{M}_k^{\rm{suc}} \subseteq \mathcal{M}_k$ and $M^{\rm{suc}}_k \leq M_k$. The communication delay caused by the model uploading from ICV-$m$ to the BS in the $k$-th round is denoted by $\tau^{\rm{up}}_{k,m}$.

\subsubsection{Global aggregation}
At the end of the $k$-th round, the BS aggregates the received local models to update the global model:
\begin{equation}
    \label{eq:FL-ICV aggregation alpha}
    \boldsymbol{w}_{k+1} = \sum_{m \in \mathcal{M}^{\rm{suc}}_k} \frac{D_m}{\sum_{m^{\prime} \in \mathcal{M}^{\rm{suc}}_k} {D_{m^{\prime}}}} \boldsymbol{w}_{k,m}^{[H]},
\end{equation}
where $D_m$ is the cardinality of the dataset of ICV-$m$.

\subsection{Problem Formulation}
In this paper, the optimization objective is to maximize the convergence speed of the global model in the FL-ICV framework.
To measure the convergence speed, we can compare the final minimum loss $\mathcal{L}_{\rm{min}}(t)$ of the global model after a given period of training time $t$, which is defined as:
\begin{equation}
    \mathcal{L}_{\rm{min}}(t) \triangleq \min_{k\in\left\{0,1,...,{\left\lfloor \frac{t}{T} \right\rfloor} \right\}} \mathcal{F}\left(\boldsymbol{w}_k,\mathcal{D}_{\rm{val}}\right), 
    \label{eq:final minimum loss}
\end{equation}
where $\mathcal{D}_{\rm{val}}$ is the validation dataset. Then our objective is to minimize $\mathcal{L}_{\rm{min}}(t)$ by optimizing the round duration $T$ and the local iteration number $H$. 
However, it is difficult to directly solve the optimization problem based on \eqref{eq:final minimum loss}, as the convergence process of FL is very complex. To overcome this difficulty, we consider a heuristic problem in the following.

Existing papers such as \cite{li2019} show that the convergence rate of FL is proportional to the frequency of model updates. Since the model update of a round may be invalid if there is no ICV successfully uploading the model to the BS in this round, we introduce a new objective function $g(H,T)$ which represents the frequency of \textit{valid} model updates in the FL-ICV framework:
\begin{equation}
    g(H,T) \triangleq \frac{{H}}{{T}}\cdot {\mathbb{P}\left\{{ M^{\rm{suc}}_k > 0 | H,T }\right\}}.
    \label{eq:objective function}
\end{equation}
The design of \eqref{eq:objective function} is motivated by the following: 1) to accelerate the FL process, one solution is to carry out more frequent global aggregations, i.e., to shorten the round duration ${T}$; 2) to make the local training more efficient, more SGD iterations could be taken in each round, i.e., to increase the local iterations $H$; 3) to achieve a higher proportion of valid global aggregations, the probability term ${\mathbb{P}\left\{ M^{\rm{suc}}_k > 0 \right\}}$ should also be considered.

Then the optimization problem can be reformulated as:
\begin{align} 
    \text{$\mathcal{P}$0:} \quad \max_{H,T} \quad & g(H,T)\\
    \text{s.t.} \quad  & T \in \mathbb{R}^+, \, H \in \mathbb{N}^+.
\end{align}

Although $g(H,T)$ is designed from intuition, the simulation results in Section \ref{sec:Simulation Results} will show its strong correlation with $\mathcal{L}_{\rm{min}}(t)$.
Note that the design of $g(H,T)$ \emph{does not} rely on any assumptions like \cite{wang2019,shi2021} that the loss function is convex, $\rho$-Lipschitz and $\beta$-smooth.

\section{Performance Analysis and Optimization}
In this section, we first analyze the probability distribution of $M^{\rm{suc}}_k$, i.e., the number of ICVs which successfully upload models during the $k$-th round.
Based on the analysis, we obtain the detailed expression of $g(H,T)$ and the bounded area in which the optimal solution $(H^*,T^*)$ to \text{$\mathcal{P}$0} exists. Finally, we design an optimization algorithm called MOB-FL to solve \text{$\mathcal{P}$0}.

\subsection{The probability distribution of $M^{\rm{suc}}_k$}
There are several factors influencing the probability distribution of $M^{\rm{suc}}_k$ in the FL-ICV framework, such as the arrival rate $\lambda$ and velocity $v$ of ICVs, the training section length $L$, the round duration $T$, the communication delays $\tau^{\rm{down}}_{k,m}$ and $\tau^{\rm{up}}_{k,m}$, and the computing delay $\tau^{\rm{cp}}_{k,m}$ affected by the local iteration number $H$. 

For the simplicity of analysis, we consider $\tau^{\rm{down}}_{k,m}$ and $\tau^{\rm{up}}_{k,m}$ as constant values $\tau^{\rm{down}}$ and $\tau^{\rm{up}}$, respectively. The speed of each ICV is assumed to be a constant value $v$. So the duration each ICV stays within the training section is $T_0 = \frac{L}{v}$. 

For any ICV $m \in \mathcal{M}_k$, its arrival time $\zeta^A_m$ must be within the time interval $\left(kT-T_0, \left(k+1\right)T\right)$.
Let $\Phi^{[1]}_{k,m}$ be the instant when the ICV-$m$ successfully completes model uploading and $\Phi^{[2]}_{k,m}$ be the deadline for the ICV-$m$ to complete model uploading. 
It holds that $m \in \mathcal{M}^{\rm{suc}}_k$ if and only if:
\begin{equation}
    \Phi^{[1]}_{k,m} \leq \Phi^{[2]}_{k,m}.
    \label{eq:condition for m in suc}
\end{equation}
Since only after the ICV-$m$ arrives on the training section and the $k$-th round begins can the ICV-$m$ start to download the global model $\boldsymbol{w}_k$ from the BS, we have:
\begin{equation}
    \Phi^{[1]}_{k,m} = \max\left\{kT,\zeta^A_m\right\} + \tau^{\rm{down}}_{k,m} + \tau^{\rm{cp}}_{k,m} + \tau^{\rm{up}}_{k,m}.
    \label{eq:Phi1}
\end{equation}
Considering the ICV-$m$ leaves the training section at time $\zeta^A_m +T_0$ and the $k$-th round ends at time $(k+1)T$, the deadline $\Phi^{[2]}_{k,m}$ should be the minimum of the two:
\begin{equation}
    \Phi^{[2]}_{k,m} = \min \left\{\zeta^A_m +T_0,\left(k+1\right)T\right\}.
    \label{eq:Phi2}
\end{equation}
Note that $\zeta^A_m$ and $\tau^{\rm{cp}}_{k,m}$ are random variables. According to \eqref{eq:local train delay} and \eqref{eq:condition for m in suc}-\eqref{eq:Phi2}, we obtain a necessary condition for $m \in \mathcal{M}^{\rm{suc}}_k$:
\begin{equation}
    \Xi(H,T) \triangleq \min \left\{ T, T_0 \right\} - \mathcal{T}_{\rm{min}}(H) > 0,
    \label{eq:necessary condition for suc}
\end{equation}
where
\begin{equation}
    \mathcal{T}_{\rm{min}}(H) \triangleq \alpha H + \tau^{\rm{down}} + \tau^{\rm{up}}.
\end{equation}

Note that this condition is independent of the ICV's index $m$, arrival time $\zeta^A_m$, and the round index $k$. In other words, no ICVs can successfully upload their updated local models in a round if condition \eqref{eq:necessary condition for suc} is not satisfied.

\newtheorem{theorem}{Theorem}
\newtheorem{proposition}{Proposition}

\begin{theorem}
\label{theorem:1}
$M^{\rm{suc}}_k$ follows a Poisson distribution with parameter $\Lambda(H,T)$ given by \eqref{eq:theorem:Lambda}, if condition \eqref{eq:necessary condition for suc} holds.
\begin{multline}
        \Lambda(H,T) = \\
        2 \lambda\Xi(H,T) + \lambda\left(1-e^{-\frac{\Xi(H,T)}{\beta H}}\right)\left(\left|T-T_0\right|-2 \beta H\right).
    \label{eq:theorem:Lambda}
\end{multline}
\end{theorem}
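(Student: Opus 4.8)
\emph{Proof sketch (plan).} The argument is built on the \emph{thinning property} of Poisson processes. The arrivals $\{\zeta^A_m\}$ that are relevant to round $k$ lie in the window $\mathcal{I}_k \triangleq (kT - T_0, (k+1)T)$ of length $T + T_0$, and form a Poisson process of rate $\lambda$ there; hence, conditioned on $M_k = n$, these arrival times are distributed as $n$ i.i.d.\ uniform points in $\mathcal{I}_k$. Since the computing delays $\tau^{\rm{cp}}_{k,m}$ in \eqref{eq:local train delay} are i.i.d.\ and independent of the arrival process, the events $\{m \in \mathcal{M}^{\rm{suc}}_k\}$ defined by \eqref{eq:condition for m in suc}--\eqref{eq:Phi2} are, given the arrivals, mutually independent, each occurring with a probability $p(\zeta^A_m)$ that depends only on the arrival time. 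Consequently $M^{\rm{suc}}_k \mid M_k = n$ is $\mathrm{Binomial}(n, \bar p)$ with $\bar p = (T+T_0)^{-1}\int_{\mathcal{I}_k} p(s)\,ds$, and mixing over $M_k \sim \mathrm{Poisson}(\lambda(T+T_0))$ shows that $M^{\rm{suc}}_k \sim \mathrm{Poisson}\bigl(\lambda\int_{\mathcal{I}_k} p(s)\,ds\bigr)$. It then remains to compute $p(\cdot)$ and this integral, and to verify that $\lambda\int_{\mathcal{I}_k} p(s)\,ds = \Lambda(H,T)$.

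To obtain $p(\cdot)$, shift time by $kT$ and set $u = s - kT \in (-T_0, T)$. Combining \eqref{eq:condition for m in suc}, \eqref{eq:Phi1} and \eqref{eq:Phi2} shows that ICV-$m$ succeeds if and only if $\tau^{\rm{cp}}_{k,m} \le \ell(u) - \tau^{\rm{down}} - \tau^{\rm{up}}$, where $\ell(u) \triangleq \min\{u+T_0, T\} - \max\{0, u\}$ is precisely the length of the overlap between the ICV's presence interval $[u, u+T_0]$ and the active round interval $[0,T]$. Inserting this into the shifted-exponential CDF \eqref{eq:local train delay} gives
\[
p(u) = \max\Bigl\{\, 1 - e^{-(\ell(u) - \mathcal{T}_{\rm{min}}(H))/(\beta H)},\ 0 \,\Bigr\},
\]
the truncation at $0$ covering the case in which $\ell(u) \le \mathcal{T}_{\rm{min}}(H)$ and the working window is too short for any success.

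The last step is to evaluate $\int_{-T_0}^{T} p(u)\,du$. The geometry is that $\ell(u)$ is trapezoidal on $[-T_0, T]$: it rises with unit slope from $0$ to $\min\{T, T_0\}$, remains at the plateau value $\min\{T, T_0\}$ over an interval of length $|T - T_0|$, and falls with unit slope back to $0$. Assuming first that $T \ge T_0$ (an identical computation for $T < T_0$ yields the same expression), condition \eqref{eq:necessary condition for suc}, namely $\Xi(H,T) = T_0 - \mathcal{T}_{\rm{min}}(H) > 0$, guarantees that $p(u)$ is strictly positive on the plateau and on the tops of the two ramps, and it vanishes elsewhere. Splitting the integral into these three pieces, the substitution $x = \ell(u) - \mathcal{T}_{\rm{min}}(H)$ turns each ramp contribution into $\int_0^{\Xi(H,T)}\bigl(1 - e^{-x/(\beta H)}\bigr)\,dx = \Xi(H,T) - \beta H\bigl(1 - e^{-\Xi(H,T)/(\beta H)}\bigr)$, while the plateau contributes $|T-T_0|\bigl(1 - e^{-\Xi(H,T)/(\beta H)}\bigr)$. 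Summing the three terms and multiplying by $\lambda$ gives exactly \eqref{eq:theorem:Lambda}.

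The main obstacle I anticipate is twofold. First, one must make the Poisson claim rigorous even though the ``mark'' that decides success is generated by the continuous auxiliary variable $\tau^{\rm{cp}}_{k,m}$ rather than a coin flip; the point is that independence of these delays across ICVs and from the arrival process is exactly what renders the conditional success events independent, so that the thinning theorem (equivalently, the Binomial-to-Poisson mixing above) applies. Second, there is genuine bookkeeping in the overlap geometry: identifying the trapezoidal profile of $\ell(u)$, pinning down that its plateau has length $|T-T_0|$, and determining on which subintervals $\ell(u) > \mathcal{T}_{\rm{min}}(H)$; handling the two orderings of $T$ and $T_0$ in a unified way is what produces the $|T-T_0|$ term in \eqref{eq:theorem:Lambda}.
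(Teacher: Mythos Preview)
Your proposal is correct and follows essentially the same route as the paper: both arguments apply Poisson thinning over the arrival window $(kT-T_0,(k+1)T)$, compute the success probability as a function of arrival time via the overlap between $[\zeta^A_m,\zeta^A_m+T_0]$ and $[kT,(k+1)T]$, and evaluate the resulting integral by splitting into the two ramp pieces and the plateau of length $|T-T_0|$. Your trapezoidal description of $\ell(u)$ is exactly the paper's three sub-intervals $\bar{[1]},\bar{[2]},\bar{[3]}$, and your ramp and plateau contributions match their $p^{[k]}_1=p^{[k]}_3$ and $p^{[k]}_2$ verbatim.
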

\begin{proof}
    See Appendix \ref{appendix:theorem 1}.
\end{proof}
We have ${\mathbb{P}\left\{{ M^{\rm{suc}}_k > 0 | H,T }\right\}} = 1-e^{-\Lambda(H,T)}$ according to {Theorem \ref{theorem:1}}.

\subsection{Optimization Problem}
\begin{proposition}{\label{proposition:T up bound}}
When condition \eqref{eq:necessary condition for suc} holds, if $\frac{\partial g(H,T)}{\partial T} \geq 0$, then $T \leq \mathcal{T}_{\rm{max}}(H)$, where $\mathcal{T}_{\rm{max}}(H)$ is given by
\begin{equation}
    \mathcal{T}_{\rm{max}}(H) = 
        \begin{cases}
        T_0, &\mbox{if $\mathcal{C}_1(H) \geq 0$,}\\
        T_0 + \frac{1-12\lambda \mathcal{C}_1(H)}{4\lambda \mathcal{C}_0(H)}, &\mbox{if $\mathcal{C}_1(H) < 0$,}
        \end{cases}
    \label{eq:proposition:T up bound}
\end{equation}
where
\begin{equation}
    \mathcal{C}_0(H) = 1-e^{-\frac{T_0 - \mathcal{T}_{\rm{min}}(H)}{\beta H}},
    \label{eq:proposition:C_0}
\end{equation}
\begin{equation}
    \mathcal{C}_1(H) = 2
    \left(T_0 - \mathcal{T}_{\rm{min}}(H)\right) - \left(T_0 + 2\beta H\right)\mathcal{C}_0(H).
    \label{eq:proposition:C_1}
\end{equation}
\end{proposition}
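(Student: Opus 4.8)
The plan is to prove the contrapositive: I would show that $T > \mathcal{T}_{\rm{max}}(H)$ forces $\frac{\partial g(H,T)}{\partial T} < 0$ (with $H$ held fixed). The first step is the observation that in both branches of \eqref{eq:proposition:T up bound} we have $\mathcal{T}_{\rm{max}}(H) \geq T_0$ — in the second branch because $\mathcal{C}_1(H) < 0$ together with $\mathcal{C}_0(H) > 0$ makes the added fraction strictly positive — so it suffices to analyse the region $T > T_0$. There $\min\{T, T_0\} = T_0$, hence $\Xi(H,T) = T_0 - \mathcal{T}_{\rm{min}}(H)$ is constant in $T$ and $|T - T_0| = T - T_0$; substituting these into \eqref{eq:theorem:Lambda} and comparing with \eqref{eq:proposition:C_0}--\eqref{eq:proposition:C_1} shows, after a short rearrangement, that on this region $\Lambda(H,T)$ is \emph{affine} in $T$, namely $\Lambda(H,T) = \lambda\mathcal{C}_0(H)\,T + \lambda\mathcal{C}_1(H)$. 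Note also that condition \eqref{eq:necessary condition for suc} forces $T_0 > \mathcal{T}_{\rm{min}}(H)$, so $\mathcal{C}_0(H) > 0$.

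Next I would differentiate. Recall from Theorem~\ref{theorem:1} that $g(H,T) = \frac{H}{T}\left(1 - e^{-\Lambda(H,T)}\right)$; using $\partial\Lambda/\partial T = \lambda\mathcal{C}_0(H)$ on $T > T_0$, a one-line quotient-rule computation gives
\[
    \frac{\partial g(H,T)}{\partial T} = \frac{H}{T^{2}}\left(e^{-\Lambda(H,T)}\left(1 + \lambda\mathcal{C}_0(H)\,T\right) - 1\right),
\]
so that $\frac{\partial g}{\partial T} < 0$ is equivalent to $e^{\Lambda(H,T)} > 1 + \lambda\mathcal{C}_0(H)\,T$. I would then split on the sign of $\mathcal{C}_1(H)$. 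If $\mathcal{C}_1(H) \geq 0$, then $\Lambda(H,T) \geq \lambda\mathcal{C}_0(H)\,T > 0$, and the elementary bound $e^{x} > 1 + x$ for $x > 0$ yields the desired strict inequality for \emph{every} $T > T_0$; this is exactly the branch $\mathcal{T}_{\rm{max}}(H) = T_0$.

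The substantive case is $\mathcal{C}_1(H) < 0$. Here I would rewrite the target inequality as $e^{\Lambda} - \Lambda > 1 - \lambda\mathcal{C}_1(H)$ and use the sharper estimate $e^{x} - x \geq 1 + \frac{x^{2}}{2}$ (valid for $x \geq 0$), which reduces the goal to the quadratic condition $\Lambda(H,T)^{2} \geq -2\lambda\mathcal{C}_1(H)$, i.e. $\Lambda(H,T) \geq \sqrt{-2\lambda\mathcal{C}_1(H)}$. Since $\Lambda$ is affine, this is a linear lower bound on $T$; relaxing the surd by the AM--GM inequality $\sqrt{-2\lambda\mathcal{C}_1(H)} \leq -2\lambda\mathcal{C}_1(H) + \frac14$ turns it into $\lambda\mathcal{C}_0(H)\,T \geq -3\lambda\mathcal{C}_1(H) + \frac14$, that is $T \geq \frac{1 - 12\lambda\mathcal{C}_1(H)}{4\lambda\mathcal{C}_0(H)}$. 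Finally, adding $T_0$ guarantees that any such $T$ also lies in the region $T > T_0$ (and, incidentally, that $\Lambda(H,T) > 0$, so the Taylor bounds are legitimate), producing precisely the second branch of \eqref{eq:proposition:T up bound}.

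The delicate part will be this last case: the inequality $e^{\Lambda} > 1 + \lambda\mathcal{C}_0(H)\,T$ is transcendental, so one has to pick the Taylor and AM--GM relaxations loose enough to give a closed-form threshold yet still strong enough to stay true, while simultaneously tracking the auxiliary positivity facts ($\mathcal{C}_0(H) > 0$, $T > T_0$, and $\Lambda(H,T) > 0$) that justify the affine form of $\Lambda$ and the polynomial bounds on the whole half-line $T > \mathcal{T}_{\rm{max}}(H)$. Everything else — the reduction to $T > T_0$, the differentiation, and the case $\mathcal{C}_1(H) \geq 0$ — is routine algebra.
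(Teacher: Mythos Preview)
Your proposal is correct and follows essentially the same route as the paper's proof: reduce to $T>T_0$ where $\Lambda(H,T)=\lambda\mathcal{C}_0(H)T+\lambda\mathcal{C}_1(H)$ is affine, differentiate to obtain the equivalent inequality $e^{\Lambda}>1+\lambda\mathcal{C}_0(H)T$, dispatch the case $\mathcal{C}_1(H)\ge 0$ with $e^x>1+x$, and in the case $\mathcal{C}_1(H)<0$ combine the Taylor bound $e^x\ge 1+x+\tfrac{x^2}{2}$ with the AM--GM relaxation $\sqrt{x}\le x+\tfrac14$ to arrive at exactly the threshold $T_0+\frac{1-12\lambda\mathcal{C}_1(H)}{4\lambda\mathcal{C}_0(H)}$. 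The only cosmetic difference is that the paper phrases the second case as a forward chain of strict inequalities starting from $T>\mathcal{T}_{\rm max}(H)$, whereas you work backwards from the target inequality; the ingredients and the resulting bound are identical.
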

\begin{proof}
    See Appendix \ref{appendix:proposition:T up bound}.
\end{proof}

\begin{proposition}{\label{proposition:unimodel}}
For a given $H$, $g(H,T)$ is a unimodal function for $T \in \left[\mathcal{T}_{\rm{min}}(H), \infty \right)$. 
\end{proposition}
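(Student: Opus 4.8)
The plan is to reduce unimodality of $g(H,\cdot)$ to strict monotonicity of a single auxiliary function that governs the sign of $\frac{\partial g}{\partial T}$. First I would dispose of the degenerate case: if $T_0 \le \mathcal{T}_{\rm{min}}(H)$ then $\Xi(H,T) \le 0$ for every $T$, condition \eqref{eq:necessary condition for suc} never holds, $\mathbb{P}\{M^{\rm{suc}}_k > 0 \mid H,T\} \equiv 0$, and $g(H,\cdot) \equiv 0$ is trivially unimodal. So assume $T_0 > \mathcal{T}_{\rm{min}}(H)$; then condition \eqref{eq:necessary condition for suc} holds for all $T > \mathcal{T}_{\rm{min}}(H)$, and by Theorem~\ref{theorem:1} we may write $g(H,T) = \frac{H}{T}(1 - e^{-\Lambda(H,T)})$ with $\Lambda(H,T)$ given by \eqref{eq:theorem:Lambda}, while $g(H,\mathcal{T}_{\rm{min}}(H)) = 0$.

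Next I would split the domain at the kink $T = T_0$. On $[\mathcal{T}_{\rm{min}}(H),T_0]$ we have $\Xi = T - \mathcal{T}_{\rm{min}}(H)$ and $|T-T_0| = T_0 - T$, so $\Lambda$ is an explicit smooth function of $T$; on $[T_0,\infty)$ we have $\Xi = T_0 - \mathcal{T}_{\rm{min}}(H)$, a constant in $T$, and $|T-T_0| = T - T_0$, so $\Lambda$ is affine in $T$ with slope $c := \lambda(1 - e^{-(T_0 - \mathcal{T}_{\rm{min}}(H))/(\beta H)}) > 0$. Computing the two one-sided $T$-derivatives of $\Lambda$ at $T_0$ shows both equal $c$, so $\Lambda(H,\cdot)$, and hence $g(H,\cdot)$, is $C^1$ on $(\mathcal{T}_{\rm{min}}(H),\infty)$ and I may differentiate freely across the junction.

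The core step is the following. Writing $\Lambda'$ and $\Lambda''$ for the first and second partial derivatives of $\Lambda$ in $T$, routine differentiation gives $\frac{\partial g}{\partial T} = \frac{H}{T^2}\,\psi(T)$, where $\psi(T) := e^{-\Lambda}(T\Lambda' + 1) - 1$, and consequently $\psi'(T) = T e^{-\Lambda}(\Lambda'' - (\Lambda')^2)$. Since $T e^{-\Lambda} > 0$, it suffices to prove $\Lambda'' < (\Lambda')^2$ on $(\mathcal{T}_{\rm{min}}(H),\infty)$: then $\psi$ is strictly decreasing, so $\psi$ — and $\frac{\partial g}{\partial T}$, which has the same sign — changes sign at most once, and only from positive to negative; that is, $g(H,\cdot)$ is nondecreasing and then nonincreasing, which is exactly the claimed unimodality. (One can moreover verify that $\psi$ has a positive limit as $T \downarrow \mathcal{T}_{\rm{min}}(H)$ and tends to $-1$ as $T \to \infty$, so the maximizer is in fact unique and interior.) On $[T_0,\infty)$ the inequality is immediate, as $\Lambda'' = 0 < c^2 = (\Lambda')^2$. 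On $[\mathcal{T}_{\rm{min}}(H),T_0]$ a direct computation — using $|T-T_0| - 2\beta H = (T_0 - T) - 2\beta H$ and $2\beta H/(\beta H) = 2$ to produce the cancellations — gives $\Lambda'' = -\lambda(\beta H)^{-2}\,e^{-(T - \mathcal{T}_{\rm{min}}(H))/(\beta H)}\,(T_0 - T) \le 0$, with equality only at $T = T_0$, where $\Lambda' = c \ne 0$; hence $\Lambda'' < (\Lambda')^2$ throughout.

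I expect the main obstacle to be precisely that last computation of $\Lambda''$ on $[\mathcal{T}_{\rm{min}}(H),T_0]$: there $\Lambda$ is a linear term plus the product of an affine factor and $1 - e^{-(\cdot)}$, so $\Lambda''$ expands into several pieces, and collapsing it to the clean nonpositive form requires carefully tracking the exponential and applying the two identities above. The remaining ingredients — the case split, the $C^1$ gluing at $T_0$, and the manipulations of $\psi$ — are routine.
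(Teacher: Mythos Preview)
Your proof is correct and follows essentially the same route as the paper: both introduce an auxiliary function with the same sign as $\partial_T g$ and show it is strictly decreasing using $\partial_T\Lambda>0$ and $\partial_T^2\Lambda\le 0$. Your $\psi(T)=e^{-\Lambda}(T\Lambda'+1)-1$ is just $e^{-\Lambda}$ times the paper's $q(H,T)=T\Lambda'+1-e^{\Lambda}$; the paper concludes $q'<0$ from $(1-e^{\Lambda})\Lambda'+T\Lambda''<0$, whereas you factor $\psi'=Te^{-\Lambda}(\Lambda''-(\Lambda')^2)$ and verify $\Lambda''\le 0<(\Lambda')^2$, which is the same concavity input repackaged. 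Your treatment is in fact more careful than the paper's in handling the kink at $T=T_0$, the degenerate case $T_0\le\mathcal{T}_{\rm min}(H)$, and the explicit computation of $\Lambda''$ on $[\mathcal{T}_{\rm min}(H),T_0]$.
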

\begin{proof}
    See Appendix \ref{appendix:proposition:unimodel}.
\end{proof}

According to {Proposition \ref{proposition:T up bound}}, there is no need to consider the $T > \mathcal{T}_{\rm{max}}(H)$  case since our goal is to maximize $g(H,T)$. Besides, we can obtain from condition \eqref{eq:necessary condition for suc} that the lower-bound of $T$ is $\mathcal{T}_{\rm{min}}(H)$, and the upper-bound of $H$ is $\alpha^{-1}\left( \min \{ T,T_0 \}-\tau^{\rm{down}}-\tau^{\rm{up}} \right)$. Then, \text{$\mathcal{P}$0} is transformed to \text{$\mathcal{P}$1} \emph{without loss of optimality}:
\begin{subequations}
\begin{alignat}{4}
    \text{$\mathcal{P}$1:} \quad \max_{H,T} \quad & \frac{H}{T}\left(1-e^{-\Lambda(H,T)}\right) \label{formulation:optimized function}\\
    \text{s.t.} \quad  & \mathcal{T}_{\rm{min}}(H) < T \leq \mathcal{T}_{\rm{max}}(H), \label{formulation:T constraint} \\
    & H \leq \left\lfloor \frac{\min \{ T,T_0 \}-\tau^{\rm{down}}-\tau^{\rm{up}}}{\alpha} \right\rfloor, \label{formulation:H constraint} \\
    & T \in \mathbb{R}^+, \, H \in \mathbb{N}^+. \label{formulation:range constraint} 
\end{alignat}
\end{subequations}

\addtolength{\topmargin}{0.05in}

\subsection{Optimization Algorithm}
To solve \text{$\mathcal{P}$1}, an intuitive idea is to first find the local optimal $T^{[H]}$ for every $H$ under constraint \eqref{formulation:H constraint} and then traverse them to find the global optimal solution $H^*$ and $T^*$ by comparing $\left\{g\left(H,T^{[H]}\right)\right\}_H$ values. It will be shown that one can find an approximate optimal solution following the proposed $H$-$T$ joint optimization algorithm called MOB-FL, which is summarized in {Algorithm \ref{alg:optimization algorithm}}. We emphasize that MOB-FL is a mobility-aware algorithm, since it considers the speed $v$ of ICVs, as well as the arrival rate $\lambda$ of ICVs at the training section.

For a fixed $H$, the range of $T \in \mathbb{R}^+$ is decided by \eqref{formulation:T constraint}. According to {Proposition \ref{proposition:unimodel}}, we can find the optimal value of $T$ that maximizes $g(H,T)$, denoted by $T^{[H]}$, by solving the equation $\partial_T g(H,T) = 0$. Since this equation has no closed-form solution, we use bisection method with threshold $\gamma$ to obtain its approximate optimal solution $T^{[H]}_\gamma$. 

\renewcommand{\algorithmicrequire}{\textbf{Input:}}  
\renewcommand{\algorithmicensure}{\textbf{Output:}} 
\begin{algorithm}
  \caption{MOB-FL algorithm}
  \label{alg:optimization algorithm}
  \small
  \begin{algorithmic}[1]
    \REQUIRE $L,\,v,\,\tau^{\rm{down}},\,\tau^{\rm{up}},\,\alpha,\,\beta,\,\lambda,\,\gamma$
    \ENSURE $H^*_\gamma$, $T^*_\gamma$
    \STATE Initialization: $T_0=\frac{L}{v}$,\quad $H_{\rm{max}} = \left\lfloor \frac{T_0-\tau^{\rm{down}}-\tau^{\rm{up}}}{\alpha} \right\rfloor$
    \FOR{$h = 1,\dots,H_{\rm{max}}$}
        \STATE $T_{\rm{min}} \leftarrow \mathcal{T}_{\rm{min}}(H)$
        \STATE $T_{\rm{max}} \leftarrow \mathcal{T}_{\rm{max}}(h)$, where $\mathcal{T}_{\rm{max}}(\cdot)$ is given by \eqref{eq:proposition:T up bound}
        \STATE $T \leftarrow \frac{1}{2}\left(T_{\rm{min}} + T_{\rm{max}}\right)$
        \WHILE{$ T_{\rm{max}} - T_{\rm{min}} > \gamma$}
            \STATE \textbf{if} $\partial_T g(h,T) > 0$ \textbf{then} $T_{\rm{min}} \leftarrow T$
            \STATE \textbf{else} $T_{\rm{max}} \leftarrow T$
            \STATE $T \leftarrow \frac{1}{2}\left(T_{\rm{min}} + T_{\rm{max}}\right)$
        \ENDWHILE
        \STATE $T^{[h]}_\gamma \leftarrow T$, $g^{[h]}_\gamma \leftarrow g(h,T^{[h]}_\gamma)$
        
    \ENDFOR

    \STATE $H^*_\gamma \leftarrow \arg \max_{h \in \{1,\dots,H_{\rm{max}}\}} g^{[h]}_\gamma$, $T^*_\gamma \leftarrow T^{[H^*_\gamma]}_\gamma$

  \end{algorithmic}
\end{algorithm}

We repeat this process of optimizing $T$ for every $H \in \{ 1,\dots,H_{\rm{max}} \}$, where $H_{\rm{max}} = \alpha^{-1}\cdot\left( T_0 - \tau^{\rm{down}} - \tau^{\rm{up}} \right)$, then we can obtain an approximate optimal solution $(H^*_\gamma,T^*_\gamma)$ of \text{$\mathcal{P}$1} by
\begin{align}
    H^*_\gamma & = \arg \max_{h \in \{1,\dots,H_{\rm{max}}\}} g(h,T^{[h]}_\gamma), \label{eq:algorithm:opt H T 1}\\
    T^*_\gamma & = T^{[H^*_\gamma]}_\gamma.
    \label{eq:algorithm:opt H T 2}
\end{align}
With $\gamma$ approaching $0$, $(H^*_\gamma,T^*_\gamma)$ approximates the global optimal solution $(H^*,T^*)$ of \text{$\mathcal{P}$1}.

We can find that the number of search steps of {Algorithm \ref{alg:optimization algorithm}} is approximately proportional to $H_{\rm{max}}\log\frac{T_0}{\gamma}$, where $H_{\rm{max}}$ is upper bounded by $\frac{T_0}{\alpha}$ and $T_0 = \frac{L}{v}$. So the computational complexity of {Algorithm \ref{alg:optimization algorithm}} is $\mathcal{O}\left( \frac{L}{\alpha v} \log\frac{L}{\gamma v} \right)$.

\section{Experiments}{\label{sec:Simulation Results}}

In this section, simulation results show that $g(H,T)$ in \eqref{eq:objective function} is a good proxy for the convergence speed of FL-ICV. Two FL tasks are considered: the \emph{beam selection task} for millimeter-wave V2X communication, and the \emph{trajectory prediction task} for autonomous driving. We emphasize that \emph{both tasks are highly relevant for the IoV scenario}.

For the beam selection task, we utilize the Lidar2D NN model proposed in \cite{mashhadi2021} and the Raymobtime s008 dataset (https://www.lasse.ufpa.br/raymobtime/) with 9234 training samples and 1960 validation samples.
For the trajectory prediction task, we utilize the LaneGCN NN model proposed in \cite{lanegcn} and the Argoverse Motion Forecasting dataset (https://www.argoverse.org) with 205942 training samples and 39472 validation samples. 

We train the NN models for these two tasks using the FL-ICV framework. We set the learning rate as $\eta=0.1$, the batch size as $B=64$, the length of the training section as $L=400 \rm{m}$, the arrival rate of ICVs as $\lambda=0.1 \rm{s^{-1}}$, and their velocity as $v =20 \rm{m/s}$. Besides, the communication delays $\tau^{\rm{down}}$ and $\tau^{\rm{up}}$ are both set to $1\rm{s}$, and the probability distribution parameters of computing delay are set to $\alpha=\beta=0.2\rm{s}$. Each ICV carries a local dataset containing $1024$ training samples randomly selected from the total training dataset.

\begin{figure}[htbp]
\centering
    \subfloat[$-\mathcal{L}_{\rm{min}}(T_{\rm{A}})$]
    {\includegraphics[width=0.9\columnwidth,trim= 0 50 0 130,clip]{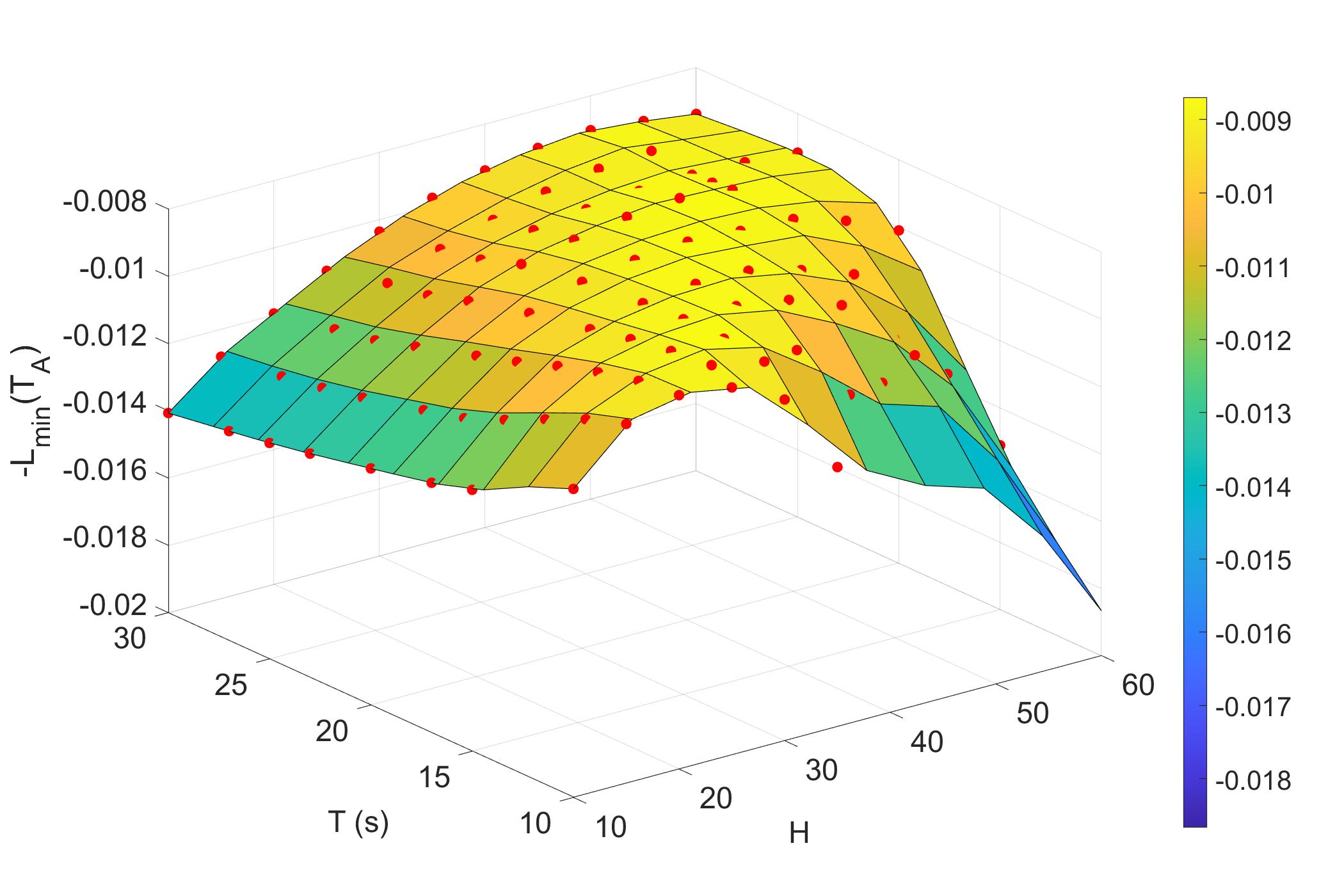}\label{fig:3d compare:beam_loss}}
    \\
    \subfloat[$g(H,T)$]
    {\includegraphics[width=0.9\columnwidth,trim= 0 50 0 130,clip]{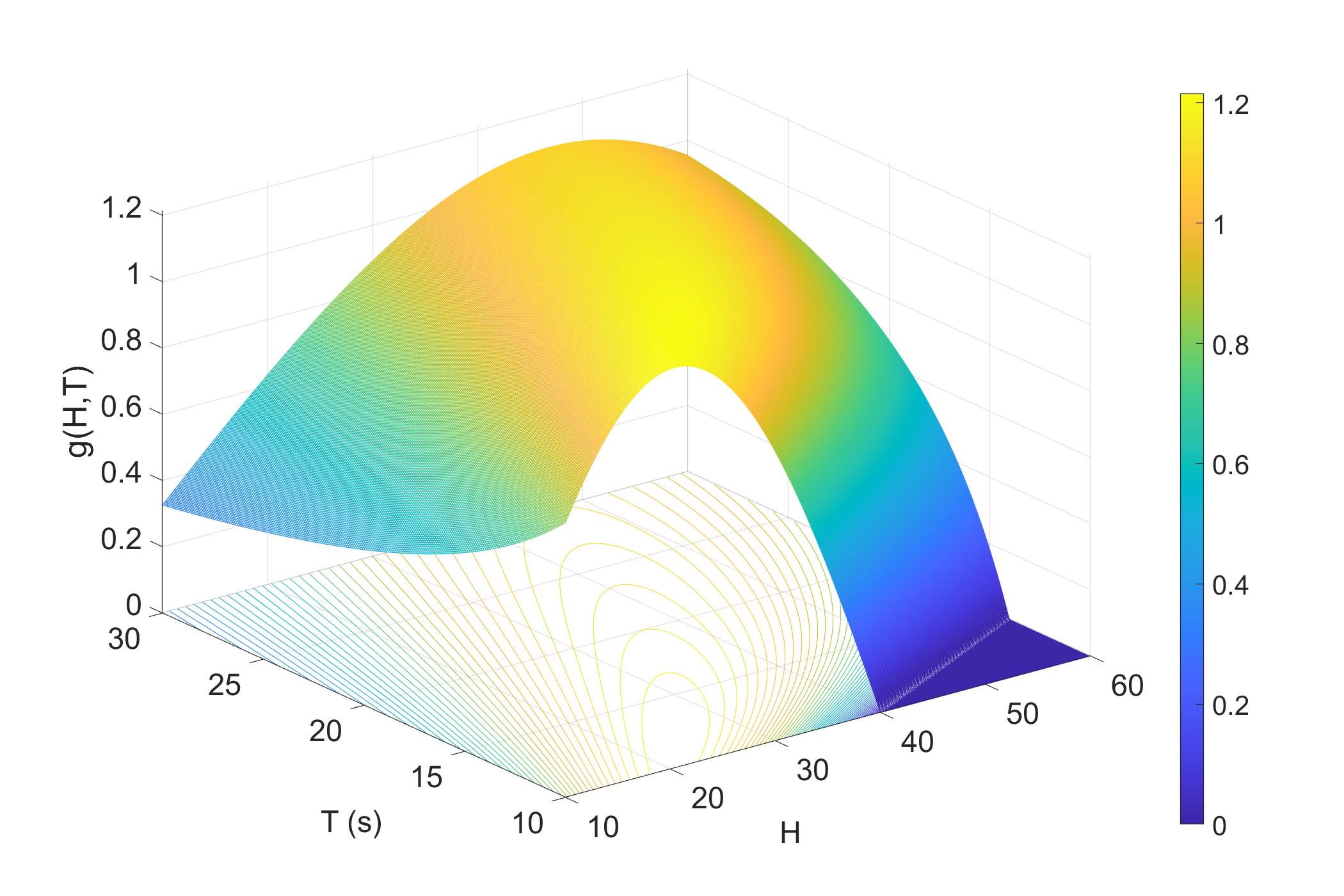}\label{fig:3d compare:beam_g}}
\caption{Comparison between $-\mathcal{L}_{\rm{min}}(T_{\rm{A}})$ and $g(H,T)$ for the beam selection task.}
\label{fig:3d compare}
\end{figure}

\begin{figure}[t]
\centering
    \subfloat[The minADE for the trajectory prediction task.]    {\includegraphics[width=0.9\columnwidth,trim= 0 5 0 5,clip]{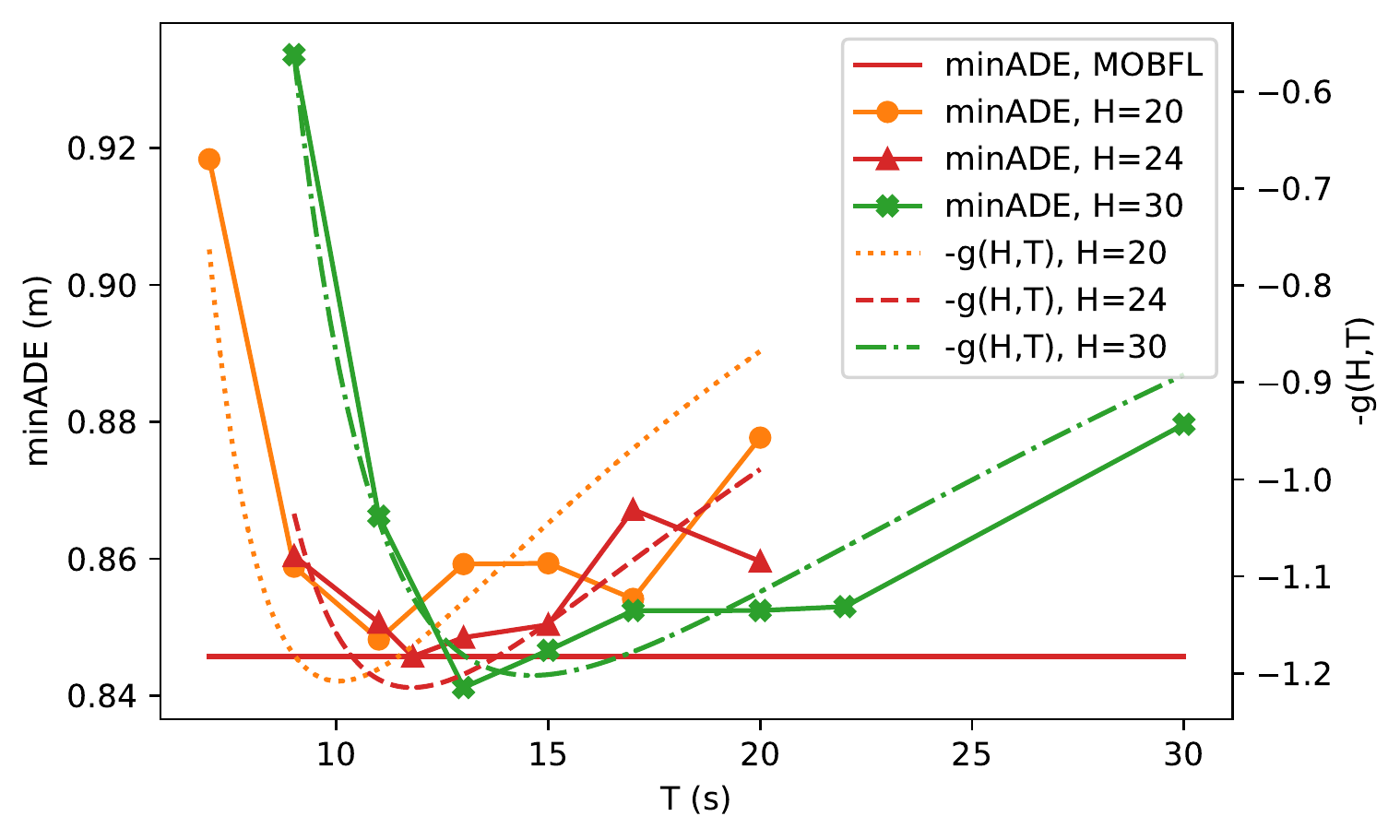}\label{fig:2d compare:traj_minADE}}    \\
    \subfloat[The top-10 accuracy for the beam selection task.]    {\includegraphics[width=0.9\columnwidth,trim= 0 5 0 5,clip]{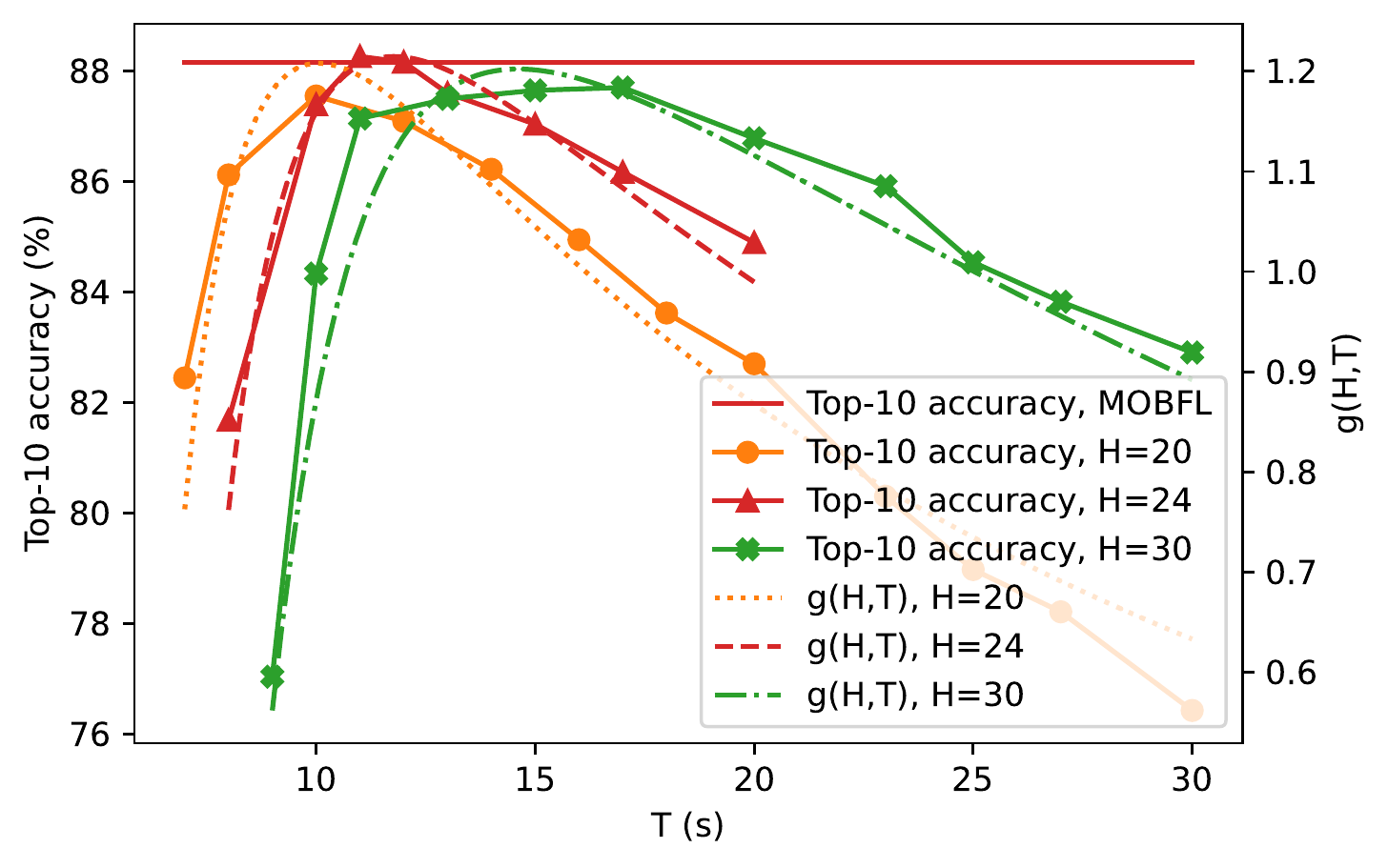}\label{fig:2d compare:beam_top10acc}}
\caption{Performance of the MOB-FL algorithm for both of the beam selection and the trajectory prediction tasks.}
\label{fig:2d compare}
\end{figure}

In Fig. \ref{fig:3d compare}, we compare the $-\mathcal{L}_{\rm{min}}(T_{\rm{A}})$ and $g(H,T)$ in the 3D coordinate system for the beam selection task, with $T_{\rm{A}} = 10000\rm{s}$. Each red point in Fig. \ref{fig:3d compare:beam_loss} represents a different realization of the experiment, with a fitting surface passing through it. The heights of the areas on the surface increase with their color varying from blue to yellow. We can find the convergence performance indicator $-\mathcal{L}_{\rm{min}}(T_{\rm{A}})$ is positively correlated with $g(H,T)$ in Fig. \ref{fig:3d compare:beam_g}, which shows that maximizing $g(H,T)$ is a meaningful proxy to optimize the convergence speed of FL-ICV. Similar results are obtained for the trajectory prediction task, which is omitted due to space limitations.

\addtolength{\topmargin}{0.2in}

To show the performance of the proposed MOB-FL algorithm, we compare the task-specific key performance indicators (KPI) of the corresponding NN models after $T_{\rm{A}}=10000\rm{s}$ time of training for both beam selection and trajectory prediction tasks in Fig. \ref{fig:2d compare}. The minADE in Fig. \ref{fig:2d compare:traj_minADE} corresponds to the minimum of the average displacement errors between the predicted trajectories and the ground truth. The top-10 accuracy in Fig. \ref{fig:2d compare:beam_top10acc} refers to the probability that the optimal beam pair is within the 10 candidates output by the NN model. Each marked point represents a different realization of the experiment. The red horizontal lines represent the MOB-FL-optimized experiment results, with the optimized pair $(H^*,T^*)=(24,11.8\rm{s})$. Each curve, with different line styles, represents the $g(H,T)$ w.r.t. $T$ for a different $H$ value. We observe not only a strong correlation between the KPIs and $g(H,T)$, but also the approximate optimality of MOB-FL algorithm in Fig. \ref{fig:2d compare}.

\section{Conclusion}
In this work, we have proposed the FL-ICV framework, where the set of schedulable ICVs is time-varying due to the mobility of ICVs passing by the BS that orchestrates the FL process. To improve the convergence speed of FL, we have formulated an optimization function $g(H,T)$, which represents the frequency of valid model updates. To maximize $g(H,T)$, we have proposed a mobility-aware optimization algorithm called MOB-FL, which considers the driving speed and the arrival rate of ICVs. Through simulations based on the beam selection and trajectory prediction tasks, we have shown that $g(H,T)$ is a good proxy for the convergence speed of FL-ICV. By optimizing $g(H,T)$ following the MOB-FL algorithm, we achieve an approximate optimal convergence performance.

\begin{appendices}
\section{Proof of Theorem \ref{theorem:1}}{\label{appendix:theorem 1}}
Taking the $k$-th round as example, $\forall m \in \mathcal{M}_k$, its arrival time $\zeta^A_{m}$ belongs to the time interval $ \left(kT-T_0, \left(k+1\right)T\right)$.

When $T \geq T_0$, the time interval can be divided into three sub-intervals, $\left(kT-T_0,kT\right)$, $\left[kT, \left(k+1\right)T-T_0\right)$ and $\left[\left(k+1\right)T-T_0, \left(k+1\right)T\right)$, denoted by $\bar{[1]}$, $\bar{[2]}$ and $\bar{[3]}$, respectively. We define $p^{[k]}_i = \mathbb{P}\left\{ m \in \mathcal{M}^{\rm{suc}}_k | \zeta^A_{m} \in \bar{[i]} \right\}$, which can be transformed to \eqref{eq:proof:p}.
Calculating \eqref{eq:proof:p} based on \eqref{eq:local train delay} and \eqref{eq:Phi1}-\eqref{eq:necessary condition for suc}, we can obtain $p^{[k]}_1$, $p^{[k]}_2$ and $p^{[k]}_3$ in \eqref{eq:proof:p1p3}-\eqref{eq:proof:p2},
\begin{align}
    \label{eq:proof:p} p^{[k]}_i &= \int_{\zeta \in \bar{[i]} } \mathbb{P}\left\{ \Phi^{[1]}_{k,m} < \Phi^{[2]}_{k,m} | \zeta^A_m = \zeta \right\} \cdot \frac{1}{\sigma\left( \bar{[i]} \right)} \cdot d\zeta, \\
    \label{eq:proof:p1p3} p^{[k]}_1  &= p^{[k]}_3  = \frac{1}{T_0}\left[\Xi(H,T) - \beta H \left( 1-e^{-\frac{\Xi(H,T)}{\beta H}} \right) \right], \\
    \label{eq:proof:p2} p^{[k]}_2 &= 1-e^{-\frac{\Xi(H,T)}{\beta H}},
\end{align}
where $\sigma(\bar{[i]})$ is the measure of $\bar{[i]}$.
According to the characteristics of Poisson processes and the independence of successful model uploading of each ICV, $M^{\rm{suc}}_k$ also follows a Poisson distribution with the expectation $\mathbb{E}\left[ M^{\rm{suc}}_k \right] = \sum_{i=1}^3 \lambda \sigma(\bar{[i]}) p^{[k]}_i = \lambda T_0 (p^{[k]}_1 + p^{[k]}_3) + \lambda(T-T_0)p^{[k]}_2 = 2 \lambda\Xi(H,T) + \lambda\left(1-e^{-\frac{\Xi(H,T)}{\beta H}}\right)\left(\left|T-T_0\right|-2 \beta H\right)$.

When $T < T_0$, the time interval can also be divided into three sub-intervals, $\left(kT-T_0,\left(k+1\right)T-T_0\right)$, $\left[\left(k+1\right)T-T_0, kT\right)$ and $\left[kT, \left(k+1\right)T\right)$. We can obtain the same conclusion as the $T \geq T_0$ case following similar steps. 

Given the above, we obtain {Theorem \ref{theorem:1}}.

\section{Proof of Proposition \ref{proposition:T up bound}}{\label{appendix:proposition:T up bound}}

It is apparent that $0<\mathcal{C}_0(H)<1$ when $\Xi(H,T) > 0$.

Based on \eqref{eq:necessary condition for suc}, \eqref{eq:theorem:Lambda} and \eqref{eq:proposition:T up bound}, when $T> \mathcal{T}_{\rm{max}}(H)$, we have
\begin{align}
    \Lambda(H,T) & = 2 \lambda \left(T_0 - \mathcal{T}_{\rm{min}}(H)\right) + \lambda(T-T_0-2 \beta H) \mathcal{C}_0(H) \nonumber \\
    & = \lambda T \mathcal{C}_0(H) + \lambda \mathcal{C}_1(H).
\end{align}
Taking the partial derivative of both sides of \eqref{eq:objective function} w.r.t. $T$, and multiplying by $\frac{1}{H}$, we obtain
\begin{multline}
    \frac{1}{H}\frac{\partial g(H,T)}{\partial T}  = \frac{e^{-\Lambda(H,T)}}{T}\frac{\partial\Lambda(H,T)}{\partial T} - \frac{1-e^{-\Lambda(H,T)}}{T^2} \\
     = \frac{(\lambda T \mathcal{C}_0(H)+1)e^{-\lambda T \mathcal{C}_0(H)-\lambda \mathcal{C}_1(H)}-1}{T^2}.
     \label{eq:theorem:1/H partial g}
\end{multline}
So, a necessary and sufficient condition for  $\frac{\partial g(H,T)}{\partial T} < 0$ is
\begin{equation}
    \lambda T \mathcal{C}_0(H)+1 < e^{\lambda T \mathcal{C}_0(H)+\lambda \mathcal{C}_1(H)}.
    \label{eq:proposition:partial g<0}
\end{equation}

When $\mathcal{C}_1(H) \geq 0$, \eqref{eq:proposition:partial g<0} holds since $x+1 < e^x, \forall x > 0$.

When $\mathcal{C}_1(H) < 0$, $\forall T > \mathcal{T}_{\rm{max}}(H)$, we have $\lambda T\mathcal{C}_0(H)+\lambda \mathcal{C}_1(H) > - 2\lambda \mathcal{C}_1(H) + \frac{1}{4} \geq \sqrt{-2\lambda \mathcal{C}_1(H)} > 0$, where the second inequality follows from $x+\frac{1}{4} \geq \sqrt{x}, \forall x \geq 0$.
It can be transformed to $\lambda T \mathcal{C}_0(H)+1 < 1 + \left(\lambda T \mathcal{C}_0(H)+\lambda \mathcal{C}_1(H)\right) + \frac{\left(\lambda T \mathcal{C}_0(H)+\lambda \mathcal{C}_1(H)\right)^2}{2} \leq e^{\lambda T \mathcal{C}_0(H)+\lambda \mathcal{C}_1(H)}$, which satisfies \eqref{eq:proposition:partial g<0}.

Given the above, we obtain {Proposition \ref{proposition:T up bound}}.

\section{Proof of Proposition \ref{proposition:unimodel}}{\label{appendix:proposition:unimodel}}
When $T > \mathcal{T}_{\rm{min}}(H)$, we have $\frac{\partial \Lambda(H,T)}{\partial T} > 0$ and $\frac{\partial^2 \Lambda(H,T)}{\partial^2 T} \leq 0$, where the equality holds if and only if $T \geq T_0$.

We define $q(H,T) \triangleq \frac{T^2 e^{\Lambda(H,T)}}{H} \frac{\partial g(H,T)}{\partial T}$. Then we have $q(H,T) = T\frac{\partial\Lambda(H,T)}{\partial T} - e^{\Lambda(H,T)} + 1$, and $\forall \,T > \mathcal{T}_{\rm{min}}(H)$,  $\frac{\partial q(H,T)}{\partial T} = \left(1 - e^{\Lambda(H,T)}\right) \frac{\partial\Lambda(H,T)}{\partial T} + T\frac{\partial^2 \Lambda(H,T)}{\partial^2 T} < 0$.

According to the following conditions:
\begin{align}
    & q\left(H,\mathcal{T}_{\rm{min}}(H)\right) = \mathcal{T}_{\rm{min}}(H)\cdot\frac{\partial\Lambda(H,\mathcal{T}_{\rm{min}}(H))}{\partial T} > 0, \nonumber
    \\
    & \frac{\partial q(H,T)}{\partial T} < 0, \forall T \in \left(\mathcal{T}_{\rm{min}}(H), \infty \right), \nonumber
    \\
    & \lim_{T\rightarrow\infty} \frac{\partial q(H,T)}{\partial T} = 
    \lim_{T\rightarrow\infty} \left(1 - e^{\Lambda(H,T)}\right) \frac{\partial \Lambda(H,T)}{\partial T} = - \infty, \nonumber
\end{align}
we can prove that $q(H,T)$ for a fixed $H \in \mathbb{N}^+$ has exactly one zero point for $T \in \left(\mathcal{T}_{\rm{min}}(H), \infty \right)$. Since $q(H,T)$ and $\frac{\partial g(H,T)}{\partial T}$ have the same sign, {Proposition \ref{proposition:unimodel}} is proved.

\end{appendices}
\bibliographystyle{ieeetr}
\bibliography{reference.bib}
\end{document}